\newcommand{\tx}{\texttt{X}}
\newcommand{\ttt}{\texttt{T}}
\newcommand{\il}{\mathcal{IL}}
\newcommand{\studeny}{Studen{\'y}}
\newtheorem{thm}{Theorem}[section]
\newtheorem{lemma}{Lemma}[section]
\newtheorem{prop}{Proposition}[section]
\newtheorem{coro}{Corollary}[section]
\theoremstyle{definition}
\newtheorem{dfn}{Definition}[section]
\newtheorem{ex}{Example}[section]
\theoremstyle{remark}
\begin{document}

\title{Causal models have no complete axiomatic
characterization
%\thanks{This work was partly supported by the National Natural Science Foundation
%of China (60305005, 60673105, 60321002), and a Microsoft Research
%Professorship.}
}
\author{Sanjiang Li \\
Department of Computer Science and Technology,\\
Tsinghua University, Beijing 100084, China\\
lisanjiang@tsinghua.edu.cn}

\date{}
\maketitle

\begin{abstract}

Markov networks and Bayesian networks are effective graphic
representations of the dependencies embedded in probabilistic
models. It is well known that independencies captured by Markov
networks (called graph-isomorphs) have a finite axiomatic
characterization. This paper, however, shows that independencies
captured by Bayesian networks (called causal models) have no
axiomatization by using even countably many Horn or disjunctive
clauses. This is because a sub-independency model of a causal model
may be not causal, while graph-isomorphs are closed under
sub-models.

\vspace*{2mm}

\noindent \textbf{Keywords:}\\
causal model; axiomatization; sub-model; conditional independence;
graph-isomorph

\end{abstract}

%\newpage
%\renewcommand{\baselinestretch}{2}

\section{Introduction}
The notion of conditional independence (CI) plays a fundamental role
in probabilistic reasoning. In traditional theories of probability,
to decide if a CI statement holds, we need to check whether two
conditional probabilities are equal, which require summations over
exponentially large number of variable combinations. This numerical
approach is clearly impractical. An alternative qualitative approach
is very popular in artificial intelligence, where new CI statements
can be derived logically without reference to numerical quantities.
Given an initial set of independence relations, a fixed (finite) set
of axioms can be used to infer new independencies by logical
manipulations.

A natural question arises: can CI relations be completely
characterized by a finite set of axioms (or called inference rules)?
Pearl and Paz \cite{PearlP85} introduced the concept of
semi-graphoid as an independency model that satisfies four specific
axioms, and showed that each CI relation is a semi-graphoid. Later,
\studeny\ \cite{Studeny92} gave a negative answer to this question.
But, more positively, he also showed that (i) CI relations have a
characterization by a countable set of axioms \cite{Studeny92}; and
(ii) every probabilistically sound axiom with at most two
antecedents is a consequence of the semi-graphoid axioms
\cite{Studeny97}.

Although CI relations in general have no complete axiomatic
characterization, Geiger and Pearl \cite{GeigerP93} developed
complete axiomatizations for saturated independence and marginal
independence -- two special families of CI relations.

Graphs are the most common metaphors for communicating and reasoning
about dependencies. It is not surprising that graphical models is a
very popular way of specifying independence constraints. There are
in general two kinds of graphical models: Markov networks and
Bayesian networks. A Markov network is an undirected graph, while a
Bayesian network is a directed acyclic graph (DAG). Geiger and Pearl
\cite{GeigerP93} developed an axiomatic basis for the relationships
between CI and graphic models in statistic analysis. They showed in
particular that (i) every axiom for conditional independence is also
an axiom for graph separation; and (ii) every graph represents a
consistent set of independence and dependence constraints. Moreover,
an early work of Pearl and Paz \cite{PearlP85} gave an axiomatic
characterization for CI relations captured by undirected graphs
(called \emph{graph-isomorphs}). It was also conjectured
\cite{Pearl88} that CI relations captured by DAGs (called
\emph{causal models}) may have no finite axiomatic characterization.

In this paper, we confirm this conjecture and show that causal
models have no complete characterization by any (finite or
countable) set of (Horn or disjunctive) axioms. We achieve this by
showing that a sub-model of a causal model can be not causal. This
is contrasted by CI relations and graph-isomorphs. Both are closed
under sub-models.

It came to us very late that the same observation has been made in
\cite[Remark 3.5]{Studeny05}, where \studeny\ gave just basic
argument. This paper will provide a complete proof for this
observation.

The remainder part of this paper proceeds as follows. Section~2
provides preliminary definitions for independency models, CI
relations, graph-isomorphs, and causal models. Section~3 gives
syntactic and semantic descriptions of independency logic, and then
formalizes the notion of axiomatization. Then in Section~4 we
discuss heredity property of independency models. Further
discussions are given in the last section.
\section{Preliminaries}
In this section we introduce the basic notions used in this paper.
Our reference is \cite{Pearl88,Pearl00}. In what follows, if not
otherwise stated we assume $U$ is a finite set, and write $\wp(U)$
for the powerset of $U$.

The notion of conditional independency (CI) plays a fundamental role
in probabilistic reasoning.

\begin{dfn}[conditional independency, CI]\label{dfn:prob_model}
Let $U$ be a finite set of variables with discrete values. Let
$P(\cdot)$ be a joint probability function over the variables in
$U$. For three disjoint subsets $X,Y,Z$ of $U$, $X$ and $Y$ are said
to be \emph{conditional independent given $Z$} if for all values
$x,y$ and $z$ such that $P(y,z)>0$ we have $P(x|y,z)=P(x|z)$.
\end{dfn}
We use the notation $I(X,Z,Y)_P$ to denote the conditional
independency of $X$ and $Y$ given $Z$. The set of all these CI
statements form a ternary relation on $\wp(U)$, called a CI
relation. In general, we have
\begin{dfn}[independency model \cite{Pearl88}]\label{dfn:independency_relation}
An independency model $M$ defined on $U$ is a ternary relation on
$\wp(U)$ which satisfies the following condition:
\begin{equation}\label{eq:disjoint}
(A,C,B)\in M\ \Rightarrow\ A, B, C\ \mbox{are\ pairwise\ disjoint.}
\end{equation}
A tuple $(A,C,B)$ in $M$ (out of $M$, resp.) is called an
independence statement (a dependence statement, resp.). We write
$I(A,C,B)_M$ to indicate the fact that $(A,C,B)$ is in $M$.
\end{dfn}

Two other classes of independency models arise from graphs, where
the notion of separation plays a key role.

\begin{dfn}[graph separation \cite{Pearl88}]
If $A,B$ and $C$ are three disjoint subsets of nodes in an
undirected graph $G$, then $C$ is said to separate $A$ from $B$,
denoted $\langle A|C|B\rangle_G$, if along every path between a node
in $A$ and a node in $B$ there is a node in $C$.
\end{dfn}
The independency model consisting of all graph separation instances
in $G$ is a graph-isomorph.
\begin{dfn}[graph-isomorph
\cite{Pearl88}]\label{dfn:graph-isomorph} An independency model $M$
is said to be a graph-isomorph if there exists an undirected graph
$G=(U,E)$ such that for every three disjoint subsets $A,B,C$ of $U$,
we have
\begin{equation}\label{eq:graph-separation}
I(A,C,B)_M\Leftrightarrow\langle A|C|B\rangle_G.
\end{equation}
\end{dfn}
For directed acyclic graphs, a similar separation property was
defined.
\begin{dfn}[$d$-separation \cite{Pearl88}]
If $A,B$ and $C$ are three disjoint subsets of nodes in a DAG $D$,
then $C$ is said to $d$-separate $A$ from $B$, denoted $\langle
A|C|B\rangle_D$, if along every path between a node in $A$ and a
node in $B$ there is a node $w$ satisfying one of the following two
conditions:
\begin{itemize}
\item  $w$ has converging arrows and none of $w$ or its
descendants are in $C$; or

\item $w$ does not have converging arrows and $w$ is in $C$.
\end{itemize}
\end{dfn}
The independency model consisting of all $d$-separation instances in
a DAG $D$ is a causal model.
\begin{dfn}[causal model \cite{Pearl88}]\label{dfn:causal}
An independency model $M$ is said to be \emph{causal} if there is a
DAG $D$ such that for every three disjoint subsets $A,B,C$ of $U$,
we have
\begin{equation}\label{eq:graph-separation}
I(A,C,B)_M\Leftrightarrow\langle A|C|B\rangle_D.
\end{equation}
\end{dfn}
It was proved by Geiger and Pearl that, for every graph-isomorph
(causal model) $M$ on $U$, there is a probability distribution $P$
on $U$ such that $M$ is precisely the CI relation induced by $P$
\cite{GeigerP88,GeigerP93}.

\section{Independency logic}
To formalize the notion of axiomatization, we introduce the
independency logic $\il$. Although $\il$ is a fragment of
first-order logic, we are mainly concerned with its propositional
counterpart.

The language of $\il$ has as its alphabet of symbols:
\begin{itemize}
\item variables
$\tx_1,\tx_2,\cdots$;

\item the constant $\varnothing$;

\item the ternary predicate $I$;

\item three function letters: $-,\cup,\cap$;

\item the punctuation symbols (,) and ,;

\item the connectives $\neg,\vee,\wedge$
\end{itemize}
Terms in the independency logic are defined as follows.
\begin{dfn}[term]\label{dfn:term}
A term in $\il$ is defined as follows.
\begin{itemize}
\item [(i)] Constant and Variables are terms.
\item [(ii)] If $\ttt_1,\ttt_2$ are terms in $\il$, then
$-\ttt_1,\ttt_1\cup\ttt_2,\ttt_1\cap\ttt_2$ are terms in $\il$.
\item [(iii)] The set of all terms is generated as in (i) and
(ii).
\end{itemize}
\end{dfn}
Using the unique predicate $I$, we can form atomic formulas.
\begin{dfn}[atom, literal, clause]\label{dfn:atom}
An atom in $\il$ is defined by: if $\ttt_i$ ($i=1,2,3$) are terms in
$\il$, then $I(\ttt_1,\ttt_2,\ttt_3)$ is an atom. A literal is
defined to be an atom (called positive literal) or its negation
(called negative literal). A clause is the disjunction of a finite
set of literals.
\end{dfn}
Formulas in $\il$ are defined in the standard way.
\begin{dfn}[formula]\label{dfn:i-formula}
A formula in $\il$ is an expression involving  atoms and connectives
$\neg,\wedge,\vee$, which can be formed using the rules:
\begin{itemize}
\item [(i)] Any  atom is a formula.
\item [(ii)] If $\mathcal{A}$ and $\mathcal{B}$ are  formulas, then
so are $(\neg\mathcal{A})$, $(\mathcal{A}\wedge\mathcal{B})$, and
$(\mathcal{A}\vee\mathcal{B})$.
\end{itemize}
\end{dfn}
In the rest of this paper, we shall sometimes omit parentheses, as
long as no ambiguity is introduced.

Since implication statement are convenient for expressing inference
rules, we define $(\mathcal{A}\rightarrow\mathcal{B})$ as an
abbreviation of $((\neg\mathcal{A})\vee\mathcal{B})$. As a
consequence, each  clause
\begin{equation}\label{eq:clause}
\bigvee_{i=1}^k \neg I(\ttt_{1i},\ttt_{2i},\ttt_{3i})\vee
\bigvee_{j=1}^l I(\ttt_{1,k+j},\ttt_{2,k+j},\ttt_{3,k+j})
\end{equation}
can be equivalently represented as an implication (or rule)
\begin{equation}\label{eq:disjunct}
\bigwedge_{i=1}^k I(\ttt_{1i},\ttt_{2i},\ttt_{3i})\rightarrow
\bigvee_{j=1}^l I(\ttt_{1,k+j},\ttt_{2,k+j},\ttt_{3,k+j}).
\end{equation}
Clauses are of particular importance in axiomatization of
independency models.
\begin{dfn}[Horn and disjunctive clauses]\label{dfn:horn-clause}
For a clause $\mathcal{C}$ of form Eq.~\ref{eq:disjunct},
$\mathcal{C}$ is called a Horn clause if $l\leq 1$, and called
disjunctive otherwise.
\end{dfn}
Above we introduced the syntactic part of $\il$. Next we turn to
semantic notions.
\begin{dfn}[valuation]\label{dfn:valuation}
Let $M$ be an independency model defined on $U$. A valuation in $M$
is a function $v:\{\tx_1,\tx_2,\cdots\}\rightarrow 2^U$.
\end{dfn}
Valuations can be extended in a natural way to terms in $\il$.
\begin{dfn}[valid valuation]\label{dfn:int}
Let $\mathcal{A}$ be a formula, and let $M$ be an independency model
defined on $U$. A valuation $v$ in $M$ is \emph{valid} for
$\mathcal{A}$ if for each atom $I(\ttt_1,\ttt_2,\ttt_3)$ appeared in
$\mathcal{A}$, $v({\ttt}_1),v({\ttt}_2)$, and $v({\ttt}_3)$ are
pairwise disjoint, where $v(\ttt)$ is the valuation of $\ttt$ in
$M$.
\end{dfn}
The notion of satisfaction is defined in the standard way. Note that
if $v$ is valid for $\mathcal{A}$ in $M$, then it is also valid for
any sub-formula $\mathcal{B}$ of $\mathcal{A}$ in $M$. The following
definition is therefore well-defined.
\begin{dfn}[satisfaction]\label{dfn:sat}
Let $\mathcal{A}$ be a formula, and let $M$ be an independency model
defined on $U$. A valuation $v$ in $M$ is said to satisfy
$\mathcal{A}$ if $v$ is valid for $\mathcal{A}$ and it can be shown
inductively to do so under the following conditions.
\begin{itemize}

\item $v$ satisfies  atom $I(\ttt_1,\ttt_2,\ttt_3)$ if
$(v({\ttt}_1),v({\ttt}_2),v({\ttt}_3))\in M$.

\item $v$ satisfies $\neg\mathcal{B}$ if $v$ does not satisfies $\mathcal{B}$.

\item $v$ satisfies $\mathcal{B}\vee\mathcal{C}$ if either
$v$ satisfies $\mathcal{B}$ or $v$ satisfies $\mathcal{C}$.

\item $v$ satisfies $\mathcal{B}\wedge\mathcal{C}$ if
$v$ satisfies both $\mathcal{B}$ and $\mathcal{C}$.
\end{itemize}
We say $M$ satisfies $\mathcal{A}$, in notation
$M\models\mathcal{A}$, if all valid valuations of $\mathcal{A}$ in
$M$ satisfy $\mathcal{A}$.
\end{dfn}

The following proposition is a consequence of the definition of
$\rightarrow$.
\begin{prop}\label{prop:sat}
Let $\mathcal{A},\mathcal{B}$ be two formulas, and let $M$ be an
independency model defined on $U$. Then
$M\models\mathcal{A}\rightarrow\mathcal{B}$ iff for any valid
valuation $v$ of $\mathcal{A}\rightarrow\mathcal{B}$ in $M$, $v$
satisfies $\mathcal{A}$ implies $v$ satisfies $\mathcal{B}$.
\end{prop}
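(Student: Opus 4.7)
The plan is to unfold the abbreviation $\mathcal{A}\rightarrow\mathcal{B}$ back to its primitive form $\neg\mathcal{A}\vee\mathcal{B}$, then apply Definition~3.6 (satisfaction) clause by clause. Throughout, one should keep in mind the remark preceding Definition~3.6: if $v$ is valid for a formula in $M$, it is automatically valid for every sub-formula, so validity for $\mathcal{A}\rightarrow\mathcal{B}$, validity for $\mathcal{A}$, and validity for $\mathcal{B}$ coincide (all three amount to the pairwise-disjointness condition on the images under $v$ of all terms occurring in atoms of either $\mathcal{A}$ or $\mathcal{B}$).

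For the forward direction, suppose $M\models\mathcal{A}\rightarrow\mathcal{B}$ and let $v$ be any valid valuation of $\mathcal{A}\rightarrow\mathcal{B}$ in $M$ that satisfies $\mathcal{A}$. Since $v$ is also valid for $\neg\mathcal{A}\vee\mathcal{B}$, by hypothesis $v$ satisfies $\neg\mathcal{A}\vee\mathcal{B}$, so by the $\vee$-clause of Definition~3.6 either $v$ satisfies $\neg\mathcal{A}$ or $v$ satisfies $\mathcal{B}$. The first possibility is excluded by the $\neg$-clause (since $v$ satisfies $\mathcal{A}$), hence $v$ satisfies $\mathcal{B}$, as required.

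For the converse, assume the implication on valid valuations and let $v$ be any valid valuation of $\mathcal{A}\rightarrow\mathcal{B}$ in $M$; we must show $v$ satisfies $\neg\mathcal{A}\vee\mathcal{B}$. If $v$ satisfies $\mathcal{A}$, the hypothesis gives that $v$ satisfies $\mathcal{B}$, hence satisfies the disjunction. Otherwise $v$ does not satisfy $\mathcal{A}$, so by the $\neg$-clause $v$ satisfies $\neg\mathcal{A}$, and again satisfies the disjunction. In either case $v$ satisfies $\mathcal{A}\rightarrow\mathcal{B}$, so $M\models\mathcal{A}\rightarrow\mathcal{B}$.

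The argument is essentially a bookkeeping exercise on the semantic definitions, so I do not expect a genuine obstacle; the only point that needs care is the validity bookkeeping described above, ensuring that the meta-implication ``$v$ satisfies $\mathcal{A}$ implies $v$ satisfies $\mathcal{B}$'' is applied only to valuations that are simultaneously valid for both $\mathcal{A}$ and $\mathcal{B}$. This is guaranteed because validity for $\mathcal{A}\rightarrow\mathcal{B}$ already subsumes validity for each sub-formula.
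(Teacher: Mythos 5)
Your proposal is correct, and it follows the same route the paper intends: the paper offers no written proof beyond the remark that the proposition ``is a consequence of the definition of $\rightarrow$,'' and your argument simply fills in that unfolding of $\mathcal{A}\rightarrow\mathcal{B}$ as $\neg\mathcal{A}\vee\mathcal{B}$ together with the satisfaction clauses, with the validity bookkeeping handled exactly as the remark before Definition~3.6 requires.
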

For a clause we have the following characterization.
\begin{coro}\label{coro:rule}
Let $\mathcal{C}$ be a clause of form Eq.~\ref{eq:disjunct}, and let
$M$ be an independency model defined on $U$. Then
$M\models\mathcal{C}$ iff the following condition holds:
\begin{itemize}
\item for any valid valuation $v$ of $\mathcal{C}$ in $M$, if
$(v(\ttt_{1i}),v(\ttt_{2i}),v(\ttt_{3i}))\in M$ for \underline{all}
$1\leq i\leq k$, then $(v(\ttt_{1j}),v(\ttt_{2j}),v(\ttt_{3j}))\in
M$ for \underline{some} $k+1\leq j\leq k+l$.
\end{itemize}
\end{coro}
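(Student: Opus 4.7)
My plan is to derive the corollary directly from Proposition~\ref{prop:sat} by unfolding the semantic clauses for $\wedge$ and $\vee$ given in Definition~\ref{dfn:sat}. The argument is essentially a chain of iff-steps and requires no combinatorics, so the work is mostly bookkeeping.

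First I would instantiate Proposition~\ref{prop:sat} with $\mathcal{A}=\bigwedge_{i=1}^k I(\ttt_{1i},\ttt_{2i},\ttt_{3i})$ and $\mathcal{B}=\bigvee_{j=1}^l I(\ttt_{1,k+j},\ttt_{2,k+j},\ttt_{3,k+j})$, noting that $\mathcal{C}$ is, by the definition of $\rightarrow$, equivalent to $\mathcal{A}\rightarrow\mathcal{B}$. This reduces the claim to showing that for every valuation $v$ valid for $\mathcal{C}$, ``$v$ satisfies $\mathcal{A}$'' is the same as ``$(v(\ttt_{1i}),v(\ttt_{2i}),v(\ttt_{3i}))\in M$ for all $1\leq i\leq k$'' and ``$v$ satisfies $\mathcal{B}$'' is the same as ``$(v(\ttt_{1j}),v(\ttt_{2j}),v(\ttt_{3j}))\in M$ for some $k+1\leq j\leq k+l$''.

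Both equivalences follow by straightforward induction on the size of the conjunction/disjunction using Definition~\ref{dfn:sat}: the clause for $\wedge$ unfolds to satisfaction of every conjunct, and the clause for $\vee$ unfolds to satisfaction of at least one disjunct; in each case the atomic clause of Definition~\ref{dfn:sat} converts ``$v$ satisfies $I(\ttt_{1m},\ttt_{2m},\ttt_{3m})$'' into ``$(v(\ttt_{1m}),v(\ttt_{2m}),v(\ttt_{3m}))\in M$''. Substituting these unfoldings into the implication from the first step yields exactly the stated condition, with the indices $k+1\leq j\leq k+l$ arising from the reindexing $j\mapsto k+j$ already built into Eq.~\ref{eq:disjunct}.

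The only technical subtlety, which is not really an obstacle, is ensuring that the atomic clause can legitimately be invoked on each literal of $\mathcal{C}$ — that is, that $v$ is valid not only for $\mathcal{C}$ but also for each atom occurring in $\mathcal{A}$ or $\mathcal{B}$, so that the triples in question consist of pairwise disjoint sets and satisfaction is well-defined. This is immediate from the remark made just before Definition~\ref{dfn:sat} that validity is inherited by every sub-formula; in particular each atomic literal of $\mathcal{C}$ inherits validity from $\mathcal{C}$, so no hidden side condition is lost in the chain of equivalences.
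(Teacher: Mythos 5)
Your proposal is correct and follows exactly the route the paper intends: the paper leaves Corollary~\ref{coro:rule} without an explicit proof precisely because it is the immediate unfolding of Proposition~\ref{prop:sat} together with the satisfaction clauses of Definition~\ref{dfn:sat}, which is what you carry out. Your attention to the validity-inheritance remark before Definition~\ref{dfn:sat} is the right way to justify applying the atomic clause to each literal, so nothing is missing.
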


Given a family of independency models $\mathbb{M}$ and a (finite or
countable) set of formulas $\mathbb{F}$ in $\il$, we now formalize
the notion that $\mathbb{B}$ can be axiomatically characterized by
$\mathbb{F}$.
\begin{dfn}[axiomatization]\label{dfn:axiom}
A family of independency models $\mathbb{M}$ can be completely
characterized by a set of formulas $\mathbb{F}$ in $\il$ if the
following condition holds for any independency model $M$:
\begin{equation}\label{eq:axiom}
M\in\mathbb{M}\Leftrightarrow(\forall\mathcal{B}\in\mathbb{F})M\models\mathcal{B}.
\end{equation}
We say $\mathbb{M}$ has a finite (countable, resp.) axiomatization
if it can be completely characterized by a finite (countable, resp.)
set of \emph{formulas} in $\il$.
\end{dfn}
Since each formula in $\il$ is semantically equivalent to the
conjunction of a set of finite clauses, we need only consider
clauses.
\begin{prop}\label{prop:axiom}
A family of independency models $\mathbb{M}$ has a finite
(countable, resp.) axiomatization iff it can be completely
characterized by a finite (countable, resp.) set of \emph{clauses}
in $\il$.
\end{prop}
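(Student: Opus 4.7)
The ($\Leftarrow$) direction is immediate, since each clause is itself a formula in $\il$: any characterizing set of clauses is already a characterizing set of formulas. For ($\Rightarrow$), by Definition~\ref{dfn:axiom} it suffices to show that every formula $\mathcal{A}$ of $\il$ is semantically equivalent (in the sense that $M\models\mathcal{A}\Leftrightarrow M\models\mathcal{A}^*$ for every independency model $M$) to a conjunction $\mathcal{A}^*=\bigwedge_{i=1}^{n}\mathcal{C}_i$ of finitely many clauses. Granted this, one simply replaces each $\mathcal{A}\in\mathbb{F}$ by the finite conjunct set $\{\mathcal{C}_1,\ldots,\mathcal{C}_n\}$; a finite (resp.\ countable) $\mathbb{F}$ produces a finite (resp.\ countable) set of clauses with the same characterizing power.

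The plan for the equivalence is the standard conjunctive normal form construction, done by induction on the structure of $\mathcal{A}$. First push negations inward using $\neg\neg\mathcal{B}\equiv\mathcal{B}$ and the De Morgan laws $\neg(\mathcal{B}\wedge\mathcal{C})\equiv(\neg\mathcal{B}\vee\neg\mathcal{C})$ and $\neg(\mathcal{B}\vee\mathcal{C})\equiv(\neg\mathcal{B}\wedge\neg\mathcal{C})$, so that negations appear only in front of atoms. Then apply distributivity of $\vee$ over $\wedge$ repeatedly to drive the remaining $\wedge$-connectives to the top, obtaining a finite conjunction of finite disjunctions of literals, i.e.\ a finite conjunction of clauses. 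The crucial structural observation is that none of these rewrite rules introduces any new atom: the set of atoms occurring in $\mathcal{A}^*$ is exactly the set of atoms occurring in $\mathcal{A}$.

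The main subtlety, and the step that requires care in our setting, is checking that this propositional CNF conversion is compatible with the $\il$-semantics of Definitions~\ref{dfn:int}--\ref{dfn:sat}, which contain a built-in validity side condition. But validity of a valuation $v$ for a formula depends only on which atoms occur in that formula; since $\mathcal{A}$ and $\mathcal{A}^*$ have the same atoms, $v$ is valid for $\mathcal{A}$ iff $v$ is valid for $\mathcal{A}^*$. Fix any such valid $v$ and assign each atom $I(\ttt_1,\ttt_2,\ttt_3)$ the Boolean value \emph{true} iff $(v(\ttt_1),v(\ttt_2),v(\ttt_3))\in M$. Under this assignment the satisfaction clauses of Definition~\ref{dfn:sat} reduce exactly to classical propositional evaluation, so the propositional equivalence of $\mathcal{A}$ and its CNF $\mathcal{A}^*$ yields $v\models\mathcal{A}\Leftrightarrow v\models\mathcal{A}^*$. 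Quantifying over all valid valuations gives $M\models\mathcal{A}\Leftrightarrow M\models\mathcal{A}^*$, which completes the argument.
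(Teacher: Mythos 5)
Your route is the one the paper itself intends: the paper gives no argument for this proposition beyond the one-line remark that every formula is semantically equivalent to a conjunction of finitely many clauses, and your observation that the rewriting to CNF introduces no new atoms, so that $\mathcal{A}$ and $\mathcal{A}^*$ have exactly the same valid valuations and hence $M\models\mathcal{A}\Leftrightarrow M\models\mathcal{A}^*$, is a correct and welcome sharpening of that remark.

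There is, however, a genuine gap in your last step, where you replace $\mathcal{A}$ by the set $\{\mathcal{C}_1,\ldots,\mathcal{C}_n\}$ and assert ``the same characterizing power.'' That assertion needs $M\models\bigwedge_{i}\mathcal{C}_i\Rightarrow M\models\mathcal{C}_i$ for each $i$, and under the valid-valuation semantics of Definitions~\ref{dfn:int}--\ref{dfn:sat} this is not automatic: the distribution-based CNF typically yields clauses whose atom sets are proper subsets of the atoms of $\mathcal{A}$ (already $\alpha\wedge\beta$ splits into two clauses with different atoms), and a valuation valid for a single clause need not be valid for the whole conjunction. The step can actually fail. Take $U=\{a,b\}$, $\mathcal{A}=I(\tx_1,\varnothing,\tx_2)\wedge\bigl(I(\tx_1,\tx_1,\varnothing)\vee\neg I(\tx_1,\tx_1,\varnothing)\bigr)$, and $M=\{(\varnothing,\varnothing,B):B\subseteq U\}$. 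Any valuation valid for $\mathcal{A}$ must send $\tx_1$ to $\varnothing$ (the atom $I(\tx_1,\tx_1,\varnothing)$ forces $v(\tx_1)\cap v(\tx_1)=\varnothing$), and then both conjuncts are satisfied, so $M\models\mathcal{A}$; but $v(\tx_1)=\{a\}$, $v(\tx_2)=\{b\}$ is valid for the clause $I(\tx_1,\varnothing,\tx_2)$ and falsifies it, so $M\not\models I(\tx_1,\varnothing,\tx_2)$. Thus with $\mathbb{F}=\{\mathcal{A}\}$ your clause set excludes a model of the family characterized by $\mathbb{F}$, i.e., the replacement does not preserve the characterization. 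The gap is repairable: use the canonical (truth-table) CNF over the atoms of $\mathcal{A}$, each of whose clauses mentions every atom of $\mathcal{A}$ (discarding $\mathcal{A}$ entirely if it is propositionally tautologous). Then every clause has exactly the same valid valuations as $\mathcal{A}$, your pointwise propositional argument applies clause by clause, and $M\models\mathcal{A}\Leftrightarrow(\forall i)\,M\models\mathcal{D}_i$ holds, with finiteness and countability of the resulting clause set preserved.
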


%We say a set of literals $\Gamma$ is \emph{consistent} in
%$\mathbb{M}$ if there is a model in $\mathbb{M}$ that satisfies all
%literals in $\Gamma$.
Analogous to propositional calculus, we have the completeness
theorem for $\il$.

\begin{thm}\label{thm:axiom}
Suppose $\mathbb{M}$ is axiomatically characterized by $\mathbb{F}$.
Let $\Sigma$ be a set of formulas, $\mathcal{A}$ be a formula. Then
the following two conditions are equivalent.
\begin{itemize}
\item [(1)] $\Sigma\models_{\mathbb{M}}\mathcal{A}$: for any model $M$ in
$\mathbb{M}$, if $M$ satisfies all formulas in $\Sigma$, it also
satisfies $\mathcal{A}$;

\item [(2)] $\Sigma\vdash_{\mathbb{F}}\mathcal{A}$: $\mathcal{A}$ is
deducible from\footnote{In the sense of logic deduction.} $\Sigma$
by using axioms in $\mathbb{F}$.
\end{itemize}
\end{thm}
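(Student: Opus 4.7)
The theorem is a standard soundness-plus-completeness result for the logic $\il$ treated as a first-order system with the non-logical axioms $\mathbb{F}$; the plan is to follow the classical propositional / universal-theory completeness template, with one small adaptation to handle the partial nature of the predicate $I$ (which is only meaningful on valid valuations).

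\textbf{Soundness} (2)$\Rightarrow$(1) is routine. Fix $M\in\mathbb{M}$ with $M\models\Sigma$. Since $\mathbb{M}$ is axiomatised by $\mathbb{F}$, $M$ already satisfies every formula in $\mathbb{F}\cup\Sigma$; a straightforward induction on the length of a deduction of $\mathcal{A}$ from $\Sigma\cup\mathbb{F}$ then yields $M\models\mathcal{A}$. The only place where partiality matters is in checking that propositional inference rules preserve the property ``every valid valuation satisfies'', and this is immediate from the remark already recorded after Definition 3.6 that validity descends to sub-formulas.

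\textbf{Completeness} (1)$\Rightarrow$(2) I would prove by contrapositive: assume $\Sigma\not\vdash_{\mathbb{F}}\mathcal{A}$ and build $M^{\star}\in\mathbb{M}$ with $M^{\star}\models\Sigma$ but $M^{\star}\not\models\mathcal{A}$. Fix a countably infinite universe $U$ and form the ground theory $T$ consisting of all substitution instances of $\Sigma\cup\mathbb{F}$ obtained by assigning the $\il$-variables $\tx_1,\tx_2,\ldots$ to arbitrary subsets of $U$, together with one distinguished instance of $\neg\mathcal{A}$ obtained by assigning pairwise disjoint subsets $A_1,A_2,\ldots$ for the variables occurring in $\mathcal{A}$. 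Treat each ground atom $I(A,C,B)$ with $A,C,B$ pairwise disjoint as a propositional letter, and force every ground atom whose arguments are not pairwise disjoint to $\bot$. The hypothesis $\Sigma\not\vdash_{\mathbb{F}}\mathcal{A}$ ensures that $T$ is propositionally consistent, so by a Lindenbaum--Tarski argument extend it to a maximal consistent $T^{\star}$ and define
$$M^{\star}=\{(A,C,B):A,C,B\text{ pairwise disjoint and }I(A,C,B)\in T^{\star}\}.$$
A standard truth-lemma, proved by induction on formula structure, then shows that every ground instance of a formula in $\Sigma\cup\mathbb{F}$ holds in $M^{\star}$ under its associated valuation; since all such instances are in $T\subseteq T^{\star}$, we obtain $M^{\star}\models\mathbb{F}\cup\Sigma$ (hence $M^{\star}\in\mathbb{M}$), while the designated valuation refutes $\mathcal{A}$, contradicting $\Sigma\models_{\mathbb{M}}\mathcal{A}$.

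\textbf{Main obstacle.} The one delicate point is reconciling the propositional canonical-model construction with the partial semantics of $I$: compound terms $\ttt_1\cup\ttt_2$ arising inside axioms of $\mathbb{F}$ may overlap other arguments of the same atom in ways that depend on the substitution, so the set of ``valid'' ground atoms is not closed under substitution in an obvious way. Splitting ground atoms explicitly into valid ones (the genuine propositional variables) and invalid ones (declared false), as above, is exactly what is needed to make the classical Lindenbaum--Tarski argument go through; the rest of the verification is essentially book-keeping, mirroring propositional completeness.
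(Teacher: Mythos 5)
You should first be aware that the paper offers no proof of this theorem at all: it is stated with the single remark that it is ``analogous to propositional calculus'' and is then only used via its corollary, so there is no in-paper argument to match yours against; what follows assesses your sketch on its own terms. Your soundness half is not routine, because the valid-valuation semantics of Def.~\ref{dfn:sat} does not make modus ponens a rule that preserves truth in a model. Take $U=\{a\}$ and $M=\{(\varnothing,\varnothing,\varnothing),(\varnothing,\varnothing,\{a\})\}$: then $M\models I(\tx_1,\varnothing,\tx_1)$ (the only valid valuations send $\tx_1$ to $\varnothing$) and $M\models I(\tx_1,\varnothing,\tx_1)\rightarrow I(\tx_1,\varnothing,\tx_2)$, yet $M\not\models I(\tx_1,\varnothing,\tx_2)$, witnessed by the valuation $\tx_1\mapsto\{a\}$, $\tx_2\mapsto\varnothing$, which is valid for the conclusion but not for the premise. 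The remark you invoke (validity passes down to subformulas) points the wrong way: modus ponens would need a valuation valid for the conclusion to be valid for the premises, which is false. So your ``straightforward induction on the length of a deduction'' cannot even begin until the relation $\vdash_{\mathbb{F}}$ --- which neither you nor the paper ever defines --- is fixed and each of its rules is checked against this partial semantics; as the example shows, an unrestricted Hilbert-style system would simply be unsound.

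The completeness half has two further concrete gaps. First, your canonical model $M^{\star}$ lives on a countably infinite universe, whereas independency models in this paper, and the families $\mathbb{M}$ at issue (CI relations, graph-isomorphs, causal models), are defined over finite $U$; Def.~\ref{dfn:axiom} determines membership in $\mathbb{M}$ only for independency models, so ``$M^{\star}\models\mathbb{F}$, hence $M^{\star}\in\mathbb{M}$'' does not follow. If you instead restrict to finite universes, the Lindenbaum--Tarski step breaks down, since finitary deducibility cannot in general capture semantic consequence over a class of finite structures (no compactness). Second, the distinguished instance of $\neg\mathcal{A}$ obtained by assigning pairwise disjoint sets to the variables of $\mathcal{A}$ is neither generic nor necessarily valid: a term such as $\tx_1\cap\tx_2$ collapses to $\varnothing$ under that assignment, so this particular instance may well be derivable (making $T$ inconsistent) even though $\mathcal{A}$ is not; and if $\mathcal{A}$ contains complements, the chosen valuation can fail to be valid for $\mathcal{A}$ at all, whereupon your convention of forcing invalid atoms to $\bot$ again renders $T$ inconsistent for trivial reasons. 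Finally, the assertion that $\Sigma\not\vdash_{\mathbb{F}}\mathcal{A}$ ``ensures that $T$ is propositionally consistent'' silently presupposes a lifting lemma connecting propositional consequence among ground instances to the undefined relation $\vdash_{\mathbb{F}}$; that lemma is where the real content of the theorem lies, and it is nowhere argued.
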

In particular, we have
\begin{coro}\label{coro:axiom2}
Let $\mathbb{M}$ and $\mathbb{F}$ be as in the above theorem. For a
set $\Gamma$ of independence statements
$\{I(\ttt_{i1},\ttt_{i2},\ttt_{i3}):1\leq i\leq k\}$ and an
independence statement $\gamma=
I(\ttt_{k+1,1},\ttt_{k+1,2},\ttt_{k+1,3})$,  we have
$\Gamma\models_{\mathbb{M}}\gamma$ iff $\gamma$ is deducible from
$\Gamma$ by using axioms in $\mathbb{F}$.
\end{coro}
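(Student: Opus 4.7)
My plan is to derive Corollary \ref{coro:axiom2} as an immediate specialization of Theorem \ref{thm:axiom}. By Definition \ref{dfn:atom}, each independence statement $I(\ttt_1,\ttt_2,\ttt_3)$ is an atomic formula of $\il$, and by Definition \ref{dfn:i-formula} every atom is itself a formula. Consequently the hypothesis set $\Gamma$ and the conclusion $\gamma$ appearing in the corollary both lie in the scope of the theorem, and I would simply instantiate the theorem with $\Sigma := \Gamma$ and $\mathcal{A} := \gamma$ to read off the equivalence $\Gamma\models_{\mathbb{M}}\gamma \Leftrightarrow \Gamma\vdash_{\mathbb{F}}\gamma$.

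The one point that deserves a sentence of commentary is the interaction between Definition \ref{dfn:sat} and the validity requirement of Definition \ref{dfn:int}: when $\mathcal{A}$ is the atom $I(\ttt_1,\ttt_2,\ttt_3)$, a valuation $v$ is valid for $\mathcal{A}$ in $M$ precisely when $v(\ttt_1),v(\ttt_2),v(\ttt_3)$ are pairwise disjoint, and $M\models I(\ttt_1,\ttt_2,\ttt_3)$ means that for every such valid $v$ the triple $(v(\ttt_1),v(\ttt_2),v(\ttt_3))$ lies in $M$. Hence the semantic entailment relation used in the corollary coincides, term by term, with the one used in the theorem, so no auxiliary translation between atomic and general formulas is needed.

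There is no genuine obstacle: all of the real work has already been absorbed into Theorem \ref{thm:axiom}, whose proof in turn follows the standard completeness argument for propositional logic, adapted to formulas built from the ternary predicate $I$ and the valid-valuation semantics of Definitions \ref{dfn:int} and \ref{dfn:sat}. The corollary is included only to highlight the practically most important case, in which $\mathbb{F}$ functions as a deductive calculus for inferring new independence statements from a given collection $\Gamma$ of them.
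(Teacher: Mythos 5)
Your proposal is correct and matches the paper's treatment: the paper offers no separate argument for Corollary~\ref{coro:axiom2}, presenting it (``In particular, we have'') as the immediate instantiation of Theorem~\ref{thm:axiom} with $\Sigma:=\Gamma$ and $\mathcal{A}:=\gamma$, exactly as you do. Your added remark that independence statements are atoms, hence formulas covered by the theorem, is a sound (if routine) justification of that specialization.
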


\section{Sub-models}
In this section, we consider sub-independency models.

\begin{dfn}[sub-model]\label{dfn:submodel}
Let $M$ be an independency model defined on $U$, and let $V$ be a
subset of $U$. We call $M|_V=\{(A,C,B)\in M: A,B,C\subseteq V\}$ the
sub-independency model (or simply sub-model) of $M$ on $V$.
\end{dfn}
The following result asserts that if an independency model satisfies
a formula, so does its sub-model.
\begin{prop}\label{prop:hier}
Let $\mathcal{A}$ be a formula, and let $M$ be an independency model
defined on $U$. For any subset $V$ of $U$, if $M$ satisfies
$\mathcal{A}$, then so does $M|_V$.
\end{prop}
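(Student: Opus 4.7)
The plan is to take an arbitrary valid valuation $v$ of $\mathcal{A}$ in $M|_V$ and show that it satisfies $\mathcal{A}$; by Definition~\ref{dfn:sat} this yields $M|_V\models\mathcal{A}$. The idea is that $v$ may be reread as a valuation in $M$ without change, and that its satisfaction behavior in the two models agrees on every subformula of $\mathcal{A}$, so the conclusion will follow from the hypothesis $M\models\mathcal{A}$.

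First I would note that, since $M|_V$ is an independency model on $V\subseteq U$, the values $v(\tx_i)$ all lie in $\wp(V)\subseteq\wp(U)$, so the same function $v$ can equally be viewed as a valuation in $M$. A short structural induction on terms then shows $v(\ttt)\in\wp(V)$ for every term $\ttt$ occurring in $\mathcal{A}$: the base case (variables and $\varnothing$) is immediate, and the cases $\ttt_1\cup\ttt_2$, $\ttt_1\cap\ttt_2$, and $-\ttt_1$ (taken relative to the ambient universe $V$) all preserve membership in $\wp(V)$. Because validity of $v$ for $\mathcal{A}$, as per Definition~\ref{dfn:int}, depends only on pairwise disjointness of the triples $(v(\ttt_1),v(\ttt_2),v(\ttt_3))$ at each atom, the same $v$ is also a valid valuation of $\mathcal{A}$ in $M$, and by hypothesis it therefore satisfies $\mathcal{A}$ there.

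Next I would prove by structural induction on subformulas $\mathcal{B}$ of $\mathcal{A}$ that $v$ satisfies $\mathcal{B}$ in $M|_V$ iff $v$ satisfies $\mathcal{B}$ in $M$. The atomic case reduces to the equivalence $(v(\ttt_1),v(\ttt_2),v(\ttt_3))\in M|_V\Leftrightarrow (v(\ttt_1),v(\ttt_2),v(\ttt_3))\in M$, which holds because all three term values are subsets of $V$ and $M|_V=\{(A,C,B)\in M:A,B,C\subseteq V\}$. The inductive steps for $\neg$, $\vee$, and $\wedge$ fall out directly from the recursive clauses of Definition~\ref{dfn:sat}. Specializing $\mathcal{B}=\mathcal{A}$ and combining with the previous paragraph then yields that $v$ satisfies $\mathcal{A}$ in $M|_V$.

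The main obstacle I anticipate is keeping the treatment of the unary operator $-$ coherent between the two models: a naive reading that interprets $-\ttt$ as $U\setminus v(\ttt)$ inside $M$ would push some term values outside $\wp(V)$ and thereby break both the validity transfer in step one and the term-value agreement in step two. I would fix this with the convention that complement is always computed with respect to the universe of the model currently under consideration, so that the invariant $v(\ttt)\in\wp(V)$ is preserved throughout the term induction and the two semantics agree on every subterm of $\mathcal{A}$; apart from this bookkeeping, the remainder of the argument is a routine structural induction.
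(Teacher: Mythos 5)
Your overall strategy is the same as the paper's: the paper's entire proof is the remark that any valuation in $M|_V$ is also a valuation in $M$, and your structural induction on terms and subformulas is the right way to make that one-liner precise. For formulas whose terms are built from variables, $\varnothing$, $\cup$ and $\cap$ only, your argument is complete and correct: the term values computed in $M|_V$ and in $M$ coincide and lie in $\wp(V)$, validity transfers, and the atomic case follows directly from $M|_V=\{(A,C,B)\in M: A,B,C\subseteq V\}$.

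The gap is exactly at the point you flagged, and your proposed fix does not close it. If complement is ``computed with respect to the universe of the model currently under consideration,'' then the two semantics do \emph{not} agree on subterms: $v(-\ttt)$ is $V\setminus v(\ttt)$ in $M|_V$ but $U\setminus v(\ttt)$ in $M$ (for instance $-\varnothing$ evaluates to $V$ in one and to $U$ in the other), so the atomic step of your induction fails whenever $-$ occurs and $V\subsetneq U$. If instead you evaluate complements relative to $U$ in both models, your invariant $v(\ttt)\in\wp(V)$ fails and the resulting triples cannot lie in $M|_V$ at all. No bookkeeping convention can repair this, because for complement-containing formulas the statement itself fails under either reading: take $U=\{a,b\}$, $V=\{a\}$, $\mathcal{A}=I(\tx_1,\varnothing,-\tx_1)$, and $M=\{(S,\varnothing,U\setminus S): S\subseteq U\}$. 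Then every valuation is valid and $M\models\mathcal{A}$, while $M|_V=\varnothing$, so the valid valuation with $v(\tx_1)=\varnothing$ (whose atom evaluates to $(\varnothing,\varnothing,V)$ or $(\varnothing,\varnothing,U)$, depending on the convention) is not satisfied, whence $M|_V\not\models\mathcal{A}$. The paper's own one-line proof silently glosses over the same point; the proposition, and your induction verbatim, are sound once terms are restricted to be complement-free, and otherwise one must either change the semantics of $-$ or weaken the statement.
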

\begin{proof} This is because any valuation $v$ in $M|_V$ is also
a valuation in $M$.
\end{proof}

An interesting question arises naturally. Given an independency
model $M$ on $U$, suppose $M$ is a CI relation (or graph-isomorph,
or causal model), and $V\subseteq U$. Is sub-model $M|_V$ also a CI
relation (or graph-isomorph, or causal model)? This is important for
a family of independency models $\mathbb{M}$ to be axiomatizable.
Actually, if $\mathbb{M}$ is not closed under sub-models, then it
cannot be axiomatically characterized by any set of formulas.

Given a joint probability $P(\cdot)$, write $M$ for the CI relation
on $U$ induced by $P(\cdot)$, i.e. for any pairwise disjoint subsets
$A, B, C$ of $U$ the tuple $(A,C,B)$ is an instance of $M$ if and
only if $A$ and $B$ are conditionally independent given $C$ (see
Def. \ref{dfn:prob_model} and Def. \ref{dfn:independency_relation}).
We claim that, for a nonempty subset $V$ of $U$, $M|_V$, the
restriction of $M$ on $V$, is a CI relation on $V$. This is because
$M|_V$ is induced by the joint probability $P|_V(\cdot)$, which is
obtained from $P(\cdot)$ by computing the marginal probability of
$P$ on $V$.

A similar conclusion holds for graph-isomorphs.

\begin{lemma}\label{lemma:graph-iso}
Let $G=(U,E)$ be an undirected graph on $U$, and let $V$ be a
nonempty proper subset of $U$. Define an undirected graph
$G^\prime=(V,E^\prime)$ as follows: for any two nodes
$\alpha,\beta\in V$, $(\alpha,\beta)\in E^\prime$ iff there is a
path $p$ from $\alpha$ to $\beta$ in $G$ such that all other nodes
in $p$ are contained in $U-V$. Then
\begin{equation}
\langle\alpha|C|\beta\rangle_G \Leftrightarrow
\langle\alpha|C|\beta\rangle_{G^\prime}
\end{equation}
 for any $\alpha,\beta\in V$, and any $C\subset V$.
\end{lemma}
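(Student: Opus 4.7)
The plan is to prove the two directions by contrapositive, converting paths between $\alpha$ and $\beta$ that avoid $C$ from one graph to the other. Throughout the argument, the key elementary fact is that $C\subseteq V$, so any node in $U-V$ is automatically outside $C$; this will let me freely treat intermediate nodes lying in $U-V$ as harmless.

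First I would establish the direction $\langle\alpha|C|\beta\rangle_G \Rightarrow \langle\alpha|C|\beta\rangle_{G'}$ by contrapositive. Given a path $p'=\alpha=u_0,u_1,\ldots,u_r=\beta$ in $G'$ with $u_i\notin C$ for all $i$, I unfold each edge $(u_i,u_{i+1})\in E'$ into its witnessing path $p_i$ in $G$ whose intermediate nodes lie in $U-V$. Concatenating $p_0,p_1,\ldots,p_{r-1}$ produces a walk in $G$ from $\alpha$ to $\beta$ whose nodes are either in $\{u_0,\ldots,u_r\}\subseteq V-C$ or in $U-V$, and hence avoid $C$. Extracting a simple path from this walk yields a $G$-path from $\alpha$ to $\beta$ avoiding $C$, contradicting $\langle\alpha|C|\beta\rangle_G$.

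For the other direction $\langle\alpha|C|\beta\rangle_{G'} \Rightarrow \langle\alpha|C|\beta\rangle_G$, I again argue by contrapositive. Let $p=\alpha=v_0,v_1,\ldots,v_n=\beta$ be a $G$-path avoiding $C$, and let $v_{i_0}=\alpha, v_{i_1},\ldots,v_{i_m}=\beta$ be the subsequence of nodes on $p$ that lie in $V$ (well-defined since $\alpha,\beta\in V$). Between any two consecutive $V$-nodes $v_{i_j}$ and $v_{i_{j+1}}$, all intermediate nodes of $p$ lie in $U-V$, so by the definition of $G'$ the pair $(v_{i_j},v_{i_{j+1}})$ is an edge of $G'$. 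Hence $v_{i_0},v_{i_1},\ldots,v_{i_m}$ forms a walk in $G'$ from $\alpha$ to $\beta$; since none of these nodes are on $p$'s forbidden set $C$, the walk avoids $C$, and any simple path extracted from it gives a $G'$-path from $\alpha$ to $\beta$ avoiding $C$, again the desired contradiction.

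The argument is essentially bookkeeping on paths rather than a deep combinatorial obstacle; the only subtle point is making sure the construction in each direction genuinely produces a \emph{path} (not merely a walk) and that no node of $C$ sneaks in as an intermediate node. Both concerns are handled by the observations that a walk between two nodes always contains a simple subpath with the same endpoints, and that $C\subseteq V$ forces every $U-V$ node to lie outside $C$ automatically.
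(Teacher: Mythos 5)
Your proof is correct and follows essentially the same route as the paper's: projecting a $G$-path onto its $V$-nodes to obtain a $G'$-path, and unfolding $G'$-edges into their witnessing $G$-paths in the other direction, with $C\subseteq V$ guaranteeing that $U-V$ nodes never lie in $C$. The only difference is cosmetic -- you phrase both directions as contrapositives and are slightly more explicit about extracting a simple path from the concatenated walk, which the paper handles with its ``proper modifications'' remark.
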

%Note that in particular $(\alpha,\beta)\in E$ implies
%$(\alpha,\beta)\in E^\prime$.

\begin{proof}
Suppose $\langle\alpha|C|\beta\rangle_{G^\prime}$. We show $C$
separates $\alpha$ from $\beta$ in $G$.  For each path
\[p=\alpha\gamma_1\gamma_2\cdots\gamma_m\beta\ \ (m\geq 0)\] in $G$,
we show $m\geq 1$ and some $\gamma_i$ is contained in $C$. Since
$\langle\alpha|C|\beta\rangle_{G^\prime}$, $(\alpha,\beta)$ is not
an edge in $G^\prime$. By definition, we know (i) $(\alpha,\beta)$
is not an edge in $G$, hence $m\geq 1$;  and (ii) some node
$\gamma_i$ must be contained in $V$. Suppose
$\gamma_{i_1},\gamma_{i_2},\cdots,\gamma_{i_k}$ ($1\leq
i_1<i_2<\cdots<i_k\leq m$) are all those nodes in $V$. Since nodes
between $\gamma_{i_u}$ and $\gamma_{i_{u+1}}$ (and those between
$\alpha$ and $\gamma_{i_1}$, and between $\gamma_{i_k}$ and $\beta$)
must be contained in $U-V$. By definition of $G^\prime$, we know
$(\alpha,\gamma_{i_1}), (\gamma_{i_u},\gamma_{i_{u+1}}),
(\gamma_{i_k},\beta)$ are all edges in $G^\prime$. Therefore
\[p^\prime=\alpha\gamma_{i_1}\gamma_{i_2}\cdots\gamma_{i_k}\beta\]
is a path in $G^\prime$. By
$\langle\alpha|C|\beta\rangle_{G^\prime}$, we know some
$\gamma_{i_u}$ must be in $C$. This shows that $C$ separates
$\alpha$ from $\beta$ for every path $p$ in $G$.

On the other hand, suppose $\langle\alpha|C|\beta\rangle_G$. We show
$C$ separates $\alpha$ from $\beta$ in $G^\prime$. For each path
\[p=\alpha\gamma_1\gamma_2\cdots\gamma_m\beta\ \ (m\geq 0)\] in
$G^\prime$, we show some $\gamma_i$ is contained in $C$.

Write $\gamma_0$ and $\gamma_{m+1}$ for $\alpha$ and $\beta$. Note
that if $(\gamma_i,\gamma_{i+1})$ is not an edge in $G$, then by
definition there is a path $p_i$ in $G$ from $\gamma_i$ to
$\gamma_{i+1}$ such that all other nodes in $p_i$ are contained in
$C$. Concatenating paths $p_0,p_1,\cdots,p_m$ we obtain a new `path'
in $G$ from $\alpha$ to $\beta$ which satisfies the following
condition:
\begin{equation*}\label{eq:condition}
\mbox{Each\ node\ is\ either\ in\ } p\ \mbox{or\ in\ } U-V.
\end{equation*}
In this `path' identical nodes may occur several times. With proper
modifications, we obtain a shortened path $p^\prime$ in $G$ which
also satisfies condition (\ref{eq:condition}). By our assumption
that $\langle\alpha|C|\beta\rangle_G$, we know some node in
$p^\prime$ must be contained in $C$. But by $C\subseteq V$, this
shows some $\gamma_i$ must be in $C$. Hence $C$ separates $\alpha$
from $\beta$ for every path $p$ in $G^\prime$.
\end{proof}

\begin{prop}\label{prop:graph-iso}
Let $M$ be a graph-isomorph on $U$. For a nonempty subset $V$ of
$U$, $M|_V$ is also a graph-isomorph.
\end{prop}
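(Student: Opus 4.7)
The plan is to reduce the proposition directly to Lemma~\ref{lemma:graph-iso} via a pointwise argument. If $V = U$ the claim is trivial (take $G' = G$), so assume $V$ is a proper nonempty subset of $U$. Let $G = (U,E)$ be an undirected graph witnessing that $M$ is a graph-isomorph, and let $G' = (V, E')$ be exactly the graph constructed in Lemma~\ref{lemma:graph-iso}. The goal is to show that $G'$ witnesses $M|_V$ as a graph-isomorph.

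First I would peel off the sub-model. By Definition~\ref{dfn:submodel}, for any pairwise disjoint $A,B,C \subseteq V$ we have $I(A,C,B)_{M|_V} \Leftrightarrow I(A,C,B)_M$, and since $G$ witnesses $M$ as a graph-isomorph, this is equivalent to $\langle A|C|B\rangle_G$. So the whole proposition reduces to the purely graph-theoretic claim
\[
\langle A|C|B\rangle_G \Leftrightarrow \langle A|C|B\rangle_{G'}
\]
for all pairwise disjoint $A,B,C \subseteq V$.

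The next step is to reduce set-separation to single-node separation: in any undirected graph $H$ and pairwise disjoint $A,B,C$, we have $\langle A|C|B\rangle_H$ iff $\langle\{\alpha\}|C|\{\beta\}\rangle_H$ for every $\alpha\in A$ and $\beta\in B$. This is immediate from the definition, since every path between a node of $A$ and a node of $B$ is, for a suitable choice of endpoints, a path between singletons $\{\alpha\}$ and $\{\beta\}$. Applying this reduction in both $G$ and $G'$, the proposition collapses to Lemma~\ref{lemma:graph-iso} applied pair-by-pair; the lemma's hypotheses ($\alpha,\beta \in V$, $C \subseteq V$, with $\{\alpha\},\{\beta\},C$ pairwise disjoint) are guaranteed by the pairwise disjointness of $A,B,C$ inside $V$.

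There is no real obstacle: Lemma~\ref{lemma:graph-iso} does all the substantive work, and what remains is bookkeeping, namely lifting singleton separation to set separation and checking that the disjointness conditions survive the reduction. The only edge case worth flagging is $A = \emptyset$ or $B = \emptyset$, in which case both separations hold vacuously and there is nothing to prove.
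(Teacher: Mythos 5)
Your proposal is correct and follows essentially the same route as the paper's own proof: use the graph $G'$ from Lemma~\ref{lemma:graph-iso}, reduce set separation to separation of singleton pairs, and transfer between $G$ and $G'$ pairwise via the lemma. The only additions are the explicit treatment of the trivial cases $V=U$ and $A=\emptyset$ or $B=\emptyset$, which the paper leaves implicit.
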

\begin{proof}
Suppose  $M$ is represented by an undirected graph $G=(U,E)$. We
show $M|_V$ can be represented by the undirected graph
$G^\prime=(V,E^\prime)$ constructed in Lemma~\ref{lemma:graph-iso},
i.e. for  for any pairwise disjoint subsets $A,B,C$ of $V$, we have
$I(A,C,B)_{M|_V}$ iff $\langle A|C|B\rangle_{G^\prime}$. By
definition of graph separation, for a graph $G^\ast$ we know
$\langle A|C|B\rangle_{G^\ast}$ iff $(\forall\alpha\in
A)(\forall\beta\in B)\langle\alpha|C|\beta\rangle_{G^\ast}$. By
Lemma~\ref{lemma:graph-iso}, for any $\alpha,\beta\in V$ and any
$C\subset V$ we have $\langle\alpha|C|\beta\rangle_G \Leftrightarrow
\langle\alpha|C|\beta\rangle_{G^\prime}$. Therefore $\langle
A|C|B\rangle_{G^\prime}$ iff $\langle A|C|B\rangle_{G}$ for any
pairwise disjoint subsets $A,B,C$ of $V$. Since $M$ is representable
by $G$, it is clear that $M|_V$ is also representable by $G^\prime$.
\end{proof}

But the following example shows that this is not true for causal
models.
\begin{figure}\centering
\includegraphics[width=.5\textwidth]{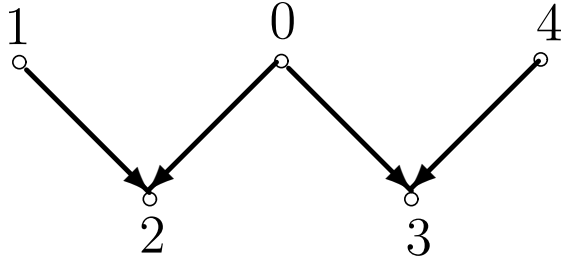}
\caption{A DAG $D$ on $U=\{0,1,2,3,4\}$.}\label{fig:c-ex}
\end{figure}

\begin{ex}\label{ex:c-ex}
Let $M$ be the causal model representable by the DAG $D$ given in
Fig.~\ref{fig:c-ex}, and let $V=\{1,2,3,4\}$. The sub-independency
model $M|_V$ is not representable by any DAG.
\end{ex}
To prove this conclusion, we use the notation $D(\alpha,\beta)$ to
express the fact that in $M|_V$ there is no $C\subset V$ such that
$I(\alpha,C,\beta)$ is true. It is clear that the following
independency statements holds in $M|_V$:
\begin{itemize}
\item $D(1,2)$, $D(3,4)$, $D(2,3)$;
\item $I(1,\varnothing, 3)$, $I(1,4,3)$, $I(2,\varnothing,4)$,
$I(2,1,4)$.
\end{itemize}
In a DAG $D=(U,\overrightarrow{E})$, for any two nodes
$\alpha,\beta\in U$, it is well known that
$(\alpha,\beta)\in\overrightarrow{E}$ or
$(\beta,\alpha)\in\overrightarrow{E}$ iff no $C\subseteq U$ can
$d$-separates $\alpha$ from $\beta$.

Suppose $M|_V$ is representable by some DAG $D^\prime$ defined on
$V$. By $D(1,2)$, $D(3,4)$, and $D(2,3)$ we know in $D^\prime$ node
1 is connected to node 2, node 2 is connected to node 3, and node 3
is connected to node 4. This shows that $p=1234$ is a path from node
$1$ to node $4$. But by $I(1,\varnothing, 3)_{M|_V}$ and
$p^\prime=123$ is a path from node 1 to node 3, we know in
$D^\prime$ we should have $1\rightarrow 2\leftarrow 3$. Similarly,
for nodes 2 and 4, we should also have $2\rightarrow 3\leftarrow 4$
in $D^\prime$. This is impossible since $2\rightarrow 3$ and
$2\leftarrow 3$ cannot appear together in the same DAG.

This proves that $M|_V$ has no DAG representation, hence is not
causal.

As a corollary of this example and Prop.~\ref{prop:hier} we have
\begin{thm}\label{thm:main}
Causal models have no complete axiomatic characterization.
\end{thm}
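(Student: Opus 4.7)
The plan is to derive Theorem~4.1 as an immediate contradiction from the two results established just above it: Proposition~4.1 (closure of formula satisfaction under sub-models) and Example~4.1 (a causal model whose restriction to a subset of variables is not causal).

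First I would assume, for contradiction, that the family $\mathbb{M}$ of causal models admits a complete axiomatic characterization by some (finite or countable) set of formulas $\mathbb{F}$ in $\il$; by Proposition~3.2 one may take $\mathbb{F}$ to consist of clauses, but this reduction is not actually needed for the argument. Let $M$ be the causal model on $U=\{0,1,2,3,4\}$ represented by the DAG $D$ of Example~4.1, and let $V=\{1,2,3,4\}$. Since $M\in\mathbb{M}$, the forward direction of Definition~3.8 (axiomatization) yields $M\models\mathcal{B}$ for every $\mathcal{B}\in\mathbb{F}$.

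Next I would invoke Proposition~4.1 to conclude that $M|_V\models\mathcal{B}$ for every $\mathcal{B}\in\mathbb{F}$ as well, since satisfaction of any single formula transfers from $M$ to each of its sub-models. The reverse direction of the assumed axiomatization then forces $M|_V\in\mathbb{M}$, i.e., $M|_V$ must be causal. But Example~4.1 has already shown that $M|_V$ is not representable by any DAG on $V$, and this contradiction completes the argument.

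The only substantive ingredient is Example~4.1 itself, which the excerpt verifies by exploiting the standard fact that two nodes in a DAG are adjacent iff no conditioning set $d$-separates them, and then extracting from the independencies $I(1,\varnothing,3)_{M|_V}$ and $I(2,\varnothing,4)_{M|_V}$ the incompatible orientations $2\to 3$ and $2\leftarrow 3$ in any putative DAG $D^\prime$ on $V$. Once that example is in hand, Proposition~4.1 does all the remaining work, so I do not anticipate any further obstacle beyond the sub-model construction that has already been carried out.
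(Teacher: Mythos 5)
Your proposal is correct and follows essentially the same route as the paper: both rest on Proposition~\ref{prop:hier} (satisfaction of formulas is inherited by sub-models) combined with Example~\ref{ex:c-ex} (the sub-model $M|_V$ is not causal). The only cosmetic difference is that you argue by contradiction from an arbitrary assumed axiomatizing set $\mathbb{F}$, while the paper argues directly with the maximal candidate set $\Gamma$ of all clauses satisfied by every causal model; the substance is identical.
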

\begin{proof}
Let $\Gamma=\{\mathcal{A}_1,\mathcal{A}_2,\cdots\}$ be the set of
clauses that are satisfied by all causal models. In particular, the
causal model $M$ given in Example~\ref{ex:c-ex} satisfies each
$\mathcal{A}_i$. By Prop.~\ref{prop:hier} we know $M|_V$ also
satisfies each $\mathcal{A}_i$. Since $M|_V$ is not causal, the
infinite set $\Gamma$ (let alone finite subsets of $\Gamma$) cannot
provide a complete characterization for causal models.
\end{proof}

\section{Discussion}
We have shown that it is impossible to give a complete axiomatic
characterization for causal models. This is different from the
results obtained in \cite{PearlP85} and \cite{Studeny92}. In
\cite{PearlP85}, Pearl and Paz proved that graph-isomorphs have a
complete characterization by five axioms (4 Horn, 1 disjunctive).
%\begin{eqnarray}
%I(X,Z,Y) & \leftrightarrow & I (Y,Z,X)\\
%I(X,Z,Y) & \rightarrow & I(X,Z,Y\cup W)\\
%I(X,Z,Y) & \rightarrow & I(X,Z\cup\gamma,Y)\ \mbox{or}\
%I(X,Z,\gamma\cup Y)
%\end{eqnarray}
%where $\mathcal{A}\leftrightarrow\mathcal{B}$ is an abbreviation of
%$(\mathcal{A}\rightarrow\mathcal{B})\wedge(\mathcal{B}\rightarrow\mathcal{A})$.
Since a sub-model of a graph-isomorph also satisfies these axioms,
it is clear that sub-models of graph-isomorphs are graph-isomorphs.
We gave a method for constructing such a graph representation.

\studeny\ \cite{Studeny92} showed that there is no finite
axiomatization for CI relations by using Horn clauses. More
positively, he also showed that there exist an infinite set of Horn
clauses that completely characterize CI relations. But it is still
unknown whether CI relations have finite axiomatization by using
arbitrary clauses (Horn or disjunctive).

The class of sub-models of causal models seems useful when (unknown)
hidden variables are involved. As for axiomatization, a result by
Geiger (see \cite[Exercises 3.7]{Pearl88}) suggests that it may have
no finite characterization by Horn axioms.

%Actually it is straightforward to see that each quasi-causal model
%satisfies the following axioms:

%\begin{equation}\label{eq:Geiger}
%\bigwedge_{i=1}^{n-1} I(\alpha_{i+1},\alpha_i,\alpha_{i+2})\wedge
%I(\alpha_{n+1},\alpha_{n},\alpha_1)\rightarrow
%I(\alpha_1,\varnothing,\alpha_{n+1}),
%\end{equation}
%where $n>3$.

%\section*{Acknowledgement}
%We thank the anonymous reviewers for their invaluable suggestions
%that greatly improved the paper. In particular, the term
%``best-equally strict" is suggested by one referee for replacing the
%debatable term ``best-equal."
%\newpage
\bibliographystyle{plain}   % plain,unsrt,alpha,abbrev
\bibliography{causal}

\begin{thebibliography}{1}

\bibitem{GeigerP88}
D.~Geiger and J.~Pearl.
\newblock On the logic of causal models.
\newblock In R.~Shachter, T.~Levitt, L.~Kanal, and J.~Lemmer, editors, {\em UAI
  '88: Proceedings of the Fourth Annual Conference on Uncertainty in Artificial
  Intelligence}, pages 3--14. North-Holland, 1988.

\bibitem{GeigerP93}
D.~Geiger and J.~Pearl.
\newblock Logical and algorithmic properties of conditional independence and
  graphical models.
\newblock {\em The Annals of Statistics}, 21(4):2001--2021, 1993.

\bibitem{Pearl88}
J.~Pearl.
\newblock {\em Probabilistic reasoning in intelligent systems: Networks of
  plausible inference}.
\newblock Morgan Kaufmann, San Francisco, CA, 1988.

\bibitem{Pearl00}
J.~Pearl.
\newblock {\em Causality: Models, Reasoning, and Inference}.
\newblock Cambridge University Press, 2000.

\bibitem{PearlP85}
J.~Pearl and A.~Paz.
\newblock Graphoids: a graph-based logic for reasoning about relevance
  relations.
\newblock In B.D. Boulay, D.~Hogg, and L.~Steels, editors, {\em Advances in
  Artificial Intelligence--II}, pages 357--363, Amsterdam, 1985.

\bibitem{Studeny92}
M.~Studen\'{y}.
\newblock Conditional independence relations have no finite complete
  characterization.
\newblock In S.~Kub\'{i}k and J.~V\'{i}\v{s}ek, editors, {\em Information
  Theory, Statistical Decision Foundation and Random Processes, Transactions of
  the 11th Prague Conference B}, pages 377--396. Kluwer, 1992.

\bibitem{Studeny97}
M.~Studen{\'y}.
\newblock Semigraphoids and structures of probabilistic conditional
  independence.
\newblock {\em Annals of Mathematics and Artificial Intelligence},
  21(1):71--98, 1997.

\bibitem{Studeny05}
M.~Studen\'{y}.
\newblock {\em Probabilistic Conditional Independence Structures}.
\newblock Springer-Verlag, London, 2005.

\end{thebibliography}
%----------------------------------------------------------------
%\bibliographystyle{amsplain}
%\bibliography{}
\end{document}